\newcommand{\ie}{i.e.,\xspace}
\newcommand{\eg}{e.g.,\xspace}
\renewcommand{\paragraph}[1]{\vspace{3mm}\noindent\textbf{#1}}
\renewcommand{\b}[1]{{\bm{#1}}}   
\renewcommand{\(}{\left(}           
\renewcommand{\)}{\right)}
\renewcommand{\[}{\left[}           
\renewcommand{\]}{\right]}           
\newcommand{\1}{\b{1}}              
\newcommand{\0}{\b{0}}              
\newcommand{\x}{\b{x}}
\newcommand{\w}{\b{w}}
\newcommand{\y}{\b{y}}
\renewcommand{\u}{\b{u}}
\renewcommand{\L}{\b{L}}            
\newcommand{\U}{\b{U}}              
\newcommand{\bSigma}{\b{\Sigma}}    
\newcommand{\bLambda}{\b{\Lambda}}  
\newcommand{\bOmega}{\b{\Omega}}  
\renewcommand{\H}{{\b{H}}}
\newcommand{\I}{\b{I}}
\newcommand{\X}{\b{X}}
\newcommand{\Y}{\b{Y}}
\newcommand{\W}{\b{W}}              
\DeclareMathOperator*{\argmin}{arg\,min}
\newcommand*{\Scale}[2][4]{\scalebox{#1}{#2}}%
\newcommand{\LG}{\L_{\hspace{-1px}G}}              
\newcommand{\LJ}{\L_{\hspace{-1px}J}}              
\newcommand{\LT}{\L_{\hspace{-1px}T}}              
\newcommand{\TG}[2]{{\mathcal{T}_{#2}^{\hspace{1px}G}} #1 \hspace{0.2mm}}              
\newcommand{\TJ}[2]{{\mathcal{T}_{#2}^{\hspace{1px}J}} \hspace{0.5mm} #1 \hspace{0.2mm}}              
\newcommand{\TT}[2]{{\mathcal{T}_{#2}^{\hspace{1px}T}} #1 \hspace{0.2mm}}              
\newcommand{\UG}{\U_{\hspace{-1px}G}}              
\newcommand{\UJ}{\U_{\hspace{-1px}J}}              
\newcommand{\UT}{\U_{\hspace{-1px}T}}              
\newcommand{\GFT}[1]{\textrm{GFT}\hspace{-.0mm}\{#1\}}
\newcommand{\IGFT}[1]{\textrm{GFT}^{\hspace{0.2mm}\Scale[0.7]{-1}}\hspace{-.0mm}\{#1\}}
\newcommand{\JFT}[1]{\textrm{JFT}\hspace{-.0mm}\{#1\}}
\newcommand{\IJFT}[1]{\textrm{JFT}^{\hspace{0.2mm}\Scale[0.7]{-1}}\hspace{-0.0mm}\{#1\}}
\newcommand{\STFT}[1]{\textrm{STFT}\hspace{-.0mm}\{#1\}}
\newcommand{\Rbb}{\mathbb{R}}
\newcommand{\E}[1]{\mathbb{E}\left[#1\right]}        
\renewcommand{\mod}[2]{\textrm{mod}\hspace{-.5mm}\(#1,#2\)}           
\renewcommand{\vec}[1]{\textrm{vec}\hspace{-.5mm}\(#1\)}           
\newcommand{\mat}[1]{\textrm{mat}\hspace{-.5mm}\(#1\)}           
\newcommand{\diag}[1]{\textrm{diag}\hspace{-.5mm}\(#1\)}           
\newcommand{\trace}[1]{\textrm{tr}\hspace{-.5mm}\(#1\)}            
\newcommand{\transpose}{\intercal}                      
\newcommand{\hermitian}{*}                      
\newcommand{\delequal}{\overset{\Delta}{=}} 
\newtheorem{theorem}{Theorem} \newtheorem{definition}{Definition}
\newtheorem{example}{Example}
\title{Towards stationary time-vertex signal processing}
\author{ Nathana{\"e}l Perraudin$^*$ and Andreas Loukas$^*$\thanks{A. Loukas and N. Perraudin contributed equally to this work.} and Francesco Grassi and Pierre Vandergheynst \\
\'{E}cole Polytechnique F\'{e}d\'{e}rale Lausanne, Switzerland}
\date{May 2016}
\begin{document}
\maketitle

\begin{abstract}
Graph-based methods for signal processing have shown promise for the analysis of data exhibiting irregular structure, such as those found in social, transportation, and sensor networks. Yet, though these systems are often dynamic, state-of-the-art methods for signal processing on graphs ignore the dimension of time, treating successive graph signals independently or taking a global average.  
To address this shortcoming, this paper considers the statistical analysis of time-varying graph signals. We introduce a novel definition of \emph{joint (time-vertex) stationarity}, which generalizes the classical definition of time stationarity and the more recent definition appropriate for graphs. Joint stationarity gives rise to a scalable Wiener optimization framework for joint denoising, semi-supervised learning, or more generally inversing a linear operator, that is provably optimal.
Experimental results on real weather data demonstrate that taking into account graph and time dimensions jointly can yield significant accuracy improvements in the reconstruction effort.
\end{abstract}

\section{Introduction}

Whether examining opinion dichotomy in social networks~\cite{adamic2005political}, how traffic evolves in the roads of a city~\cite{mohan2008nericell}, or neuronal activation patterns present in the brain~\cite{huang2015graph}, much of the high-dimensional data one encounters exhibit complex non-euclidean properties. 
This realization has been the driving force behind recent efforts to re-invent the mathematical models used for data analysis. 
Within the field of signal processing, one of the main research thrusts has been to extend harmonic analysis to graph signals, \ie signals supported on the vertices of irregular graphs. 
The key breakthrough in the field has been the introduction of a notion of frequency appropriate for graph signals and of the associated graph Fourier transform (GFT). Because it enables us to process signals taking into account complex relations between variables, the GFT has lead to advances in problems such as denoising~\cite{zhang2008graph} and semi-supervised learning~\cite{smola2003kernels,belkin2004semi}.     

Yet, state-of-the-art graph frequency based methods often fail to produce useful results when applied to real datasets. One of the main reasons underlying this shortcoming is that they ignore the time dimension, for example by treating successive signals independently or performing a global average~\cite{huang2015graph,perraudin2016stationary,kalofolias2016learn}. On the contrary, many of the systems to which graph signal processing is applied to are dynamic. 
Consider for instance a sensor network, and suppose that we want to infer the weather conditions on a mountain given temperature measurements from a small set of weather stations. Approaches that do not take into account the temporal evolution of weather will be biased by seasonal variations and unable to provide insights about transient phenomena. Moreover, when the weather dynamics are slow and predictable, taking into account the time dimension, \eg by imposing a smoothness prior, can yield accuracy improvements in the reconstruction effort.

Motivated by this need, this paper considers the \emph{statistical analysis of time-evolving graph signals}. Our results are inspired by the recent introduction of a joint temporal and graph Fourier transform (JFT), a generalization of GFT appropriate for time-varying graph signals~\cite{loukas2016frequency}, and the recent generalization of stationarity for graphs~\cite{perraudin2016stationary,girault2015stationary,marques2016stationary}. Our main contribution is a novel definition of time-vertex (wide-sense) stationarity, or \emph{joint stationarity} for short. We believe that the proposed definition is natural, at it elegantly generalizes existing definitions of stationarity in the time and vertex domains. We show that joint stationarity carries along important properties classically associated with stationarity. Moreover, our definition leads to a Wiener framework for solving denoising and interpolating time-varying graph signals, that yields superior performance compared to state-of-the-art methods in time or vertex domains. The proposed framework is composed out of two key components: a scalable joint power spectral density estimation method, and an optimization framework suitable for deconvolution under additive error. The latter  is shown to be optimal in the mean-squared error sense. Experiments with a real weather dataset illustrate the superior performance of our method, ultimately demonstrating that joint stationarity is a useful assumption in practice.

\section{Preliminaries}

Our objective is to model and predict the evolution of graph signals, \ie signals supported on the vertices $\mathcal{V} = \{ v_1, v_2, \ldots, v_N \}$ of a weighted undirected graph $G = (\mathcal{V}, \mathcal{E}, \W_G)$, with $\mathcal{E}$ the set of edges and $\W_G$ the weighted adjacency matrix.
A more convenient matrix representation of $\mathcal{G}$ is the (combinatorial\footnote{Though we use the combinatorial Laplacian in our presentation, our results are applicable to any positive semi-definite matrix representation of a graph or to the recently introduced shift operator~\cite{sandryhaila2013discrete}.}) Laplacian matrix $\LG = \diag{\W_G \1_N} - \W_G$, where $\1_N$ is the all-ones vector of size $N$, and $\diag{\W_G \1_N}$ is the diagonal degree matrix. 

\paragraph{Harmonic vertex analysis.} In the context of graph signal processing, the importance of the Laplacian matrix stems from it giving rise to a graph-specific notion of frequency. The \emph{Graph Fourier Transform} (GFT) of a graph signal $\x \in \mathbb{R}^N$ is defined as $\GFT{\x} = \UG^* \x$, where $\UG$ is the eigenvector matrix of $\LG$ and thus $\LG = \UG \bLambda_G \UG^*$.
The GFT allows us to extend filtering to graphs~\cite{hammond2011wavelets,shuman2013emerging,shuman2013vertex}. Filtering a signal $\x$ with a graph filter $h(\LG)$ corresponds to element-wise multiplication in the spectral domain
\begin{equation*}
    h(\LG)  \x \delequal \IGFT{h(\bLambda_G) \circ \GFT{\x}} = \UG h(\bLambda_G) \UG^\hermitian \, \x,
\end{equation*}
where the scalar function $h : \mathbb{R}_+ \mapsto \mathbb{R}$, referred to as the graph frequency response, has been applied to each diagonal entry of $\bLambda_G$. It is often convenient to represent the diagonal of matrix $h(\bLambda_G)$ as a vector, in which case we write $\b{h} = \diag{h(\bLambda_G)}$. The notation $\UG^\hermitian$ denotes the transposed complex conjugate of $\UG$, $\UG^\transpose$ the transpose of $\UG$, and $\bar{\UG}$ the complex conjugate of $\UG$.
We will also use the notion of \emph{graph localization}~\cite{shuman2013vertex,perraudin2016stationary}, a generalization of the translation operator used in the classical setting appropriate for graphs\footnote{Stationarity is classically defined as the invariance of statistical moments of a signal with respect to translation. This definition however cannot be directly generalized to graphs, which do not possess regular structure and thus lack of an isometric translation operator.}.  
Localizing a filter with frequency response $h$ onto vertex $v_i$ reads
\begin{equation}
\label{def:localization_operator}
    \TG{h}{i} \delequal h(\LG) \, \b{\delta}_i,
\end{equation}
where $\b{\delta}_i$ is a Kronecker delta centered at vertex $v_i$. 
For a sufficiently regular function $h$, this operation localizes the filter around $v_i$~\cite[Theorem 1 and Corollary 2]{shuman2013vertex}. 
Evaluated at the $i_2$-th vertex, the above expression becomes 
\begin{equation} \label{def:localization_operator_el}
\TG{h}{i_1} (i_2) = \sum_{n=1}^{N} h(\lambda_{n})\, \bar{\u}_{n}(i_1)\, \u_{n}(i_2),
\end{equation}
where we use the notation $\u_n(i) = [\UG]_{i,n}$ and $\lambda_n = {[\bLambda_G]}_{n,n}$. Note that in the expression above, the localization operator takes precedence over indexing and $\TG{h}{i_1} (i_2) = [\TG{h}{i_1}] (i_2)$; this convention is used throughout this paper.  
The concept of localization is intimately linked to that of \emph{translation} in the time domain. If $\TT{h}{\tau}$ is the localization operator taken on a cycle graph of $T$ vertices (representing time), localization is equivalent to translation 
\begin{equation}
\label{def:localization_operator_time}
    \TT{h}{\tau}(t) = \TT{h}{0}(t-\tau), \quad \text{for all} \quad t, \tau = 1, \ldots, T.
\end{equation}
In simple words, for a cyclic graph, the localization operator computes the inverse Fourier transform of the frequency response $h$, and translates it to vertex $v_i$. We can verify this using the fact that the complex exponential Fourier basis form the eigenvector set of all cyclic graphs~\cite{strang1999discrete}, which together with~\eqref{def:localization_operator_el} implies that
\begin{align}
    \TT{h}{t_1} (t_2) &= \frac{1}{T}\sum_{\tau=1}^{T} h(\omega_{\tau})e^{-2\pi j\frac{(\tau-1) t_1}{T}} \, e^{2\pi j\frac{(\tau-1) t_2 }{T}} \notag \\
    &\hspace{-5mm}= \frac{1}{T}\sum_{\tau=1}^{T} h(\omega_{\tau}) \, e^{2\pi j\frac{(\tau-1) (t_2-t_1)}{T}} = \TT{h}{0} (t_2-t_1).
\end{align}
Above $\TT{h}{0} = \UT \b{h}$ is the inverse Fourier transform of $\b{h}$ and $\UT$ is the orthonormal Fourier basis.
In the case of irregular graphs, localization differs further from translation because the shape of the localized filter adapts to the graph and varies as a function of its topology. Additional insights about the localization operator can be found in~\cite{shuman2013vertex,hammond2011wavelets,perraudin2016global,perraudin2016stationary}.

\paragraph{Harmonic time-vertex analysis.} Suppose that a graph signal $\x_t$ is sampled at $T$ successive regular intervals of unit length. The time-varying graph signal $\X = \left[ \x_1, \x_2, \ldots, \x_T \right] \in \mathbb{R}^{N\times T}$ is then the matrix having graph signal $\x_t$ as its $t$-th column. Equivalently, $\X = \left[ \x^1, \x^2, \ldots, \x^N\right]^\transpose$ holds $N$  temporal signals $\x^i \in \Rbb^T$, one for each vertex $v_i$. Throughout this paper, we denote as $\x = \vec{\X}$ (without subscript) the vectorized representation of the matrix $\X$. 

The frequency representation of $\X$ is given by the joint (time-vertex) Fourier transform (or JFT for short) 
\begin{align}
    \JFT{\X} = \UG^* \X \bar{\U}_T,
\end{align}
where, once more, $\UG$ is the graph Laplacian eigenvector matrix, whereas $\bar{\U}_T$ is the complex conjugate of the DFT matrix divided by $1/\sqrt{T}$. 
In fact, matrix $\UT$ is the eigenvector matrix of the lag operator (or Laplacian matrix) $\LT = \UT \, \bLambda_T \, \UT^*$. Denote by $\bOmega$ the diagonal matrix of angular frequencies (\ie $\bOmega_{tt} = \omega_t = 2 \pi t/T$). In case $\LT$ is the lag operator, we have  $\bLambda_T = e^{-j \bOmega}$, where $j = \sqrt{-1}$. When $\LT$ is the Laplacian, we have $\bLambda_T = \textrm{real}\(\b{I} - e^{-j \bOmega}\)$.
Expressed in vector form, the joint Fourier transform becomes $\JFT{\x} = \UJ^* \x$, where $\UJ = \UT \otimes \UG $ is unitary, and operator $(\otimes)$ denotes the kroneker product. The inverse joint Fourier transforms in matrix and vector form are, respectively, $\IJFT{\X} = \UG \X \UT^\transpose $ and $\IJFT{\x} = \UJ \x$. For an in-depth discussion of JFT and its properties, we refer the reader to~\cite{loukas2016frequency}.

Leveraging the definition of the JFT, filtering and localization can also be extended to the joint (time-vertex) domain. A \emph{joint filter} $h(\L_J)$ is a function defined in the joint spectral domain $h: \Rbb_+ \times \Rbb \mapsto \Rbb$ that is evaluated at the graph eigenvalues $\lambda_G$ and the angular frequencies $\omega$.  The output of a joint filter is 
\begin{equation}  \label{eq:def_joint_filtering}
    h(\LJ) \x \delequal \UJ \, h(\bLambda_G,\bOmega) \, \UJ^* \x, 
\end{equation} 
where $h(\bLambda_G,\bOmega)$ is a $NT\times NT$ diagonal matrix with  $[h(\bLambda_G,\bOmega)]_{k,k} = h(\lambda_n,\omega_\tau)$ and $k = N(\tau-1)+n$. Equivalently, if we define the matrix $\H$ of dimension $N \times T$ as $\H_{n,\tau} = h(\lambda_n,\omega_\tau)$ for every graph frequency $\lambda_n$ and temporal frequency $\omega_\tau$, we have 
\begin{equation}
         h(\LJ) \x  = \vec{ \IJFT{ \H \circ \JFT{\X})} },
\end{equation}
with $(\circ)$ being the element-wise multiplication (Hadamard product).
In an analogy to~\eqref{def:localization_operator}, we define the \emph{joint localization operator} as
\begin{eqnarray} 
    \TJ{h}{i,t}  & \delequal & \mat{ h(\LJ) \, (\b{\delta}_{t} \otimes \b{\delta}_{i})}  \label{eq:joint_localization} \\
    &= &  \, \IJFT{ \H \circ \JFT{\b{\delta}_{i} \b{\delta}_{t}^\transpose})}  
    \label{eq:joint_localization_mat}
\end{eqnarray}
where $\mat{\cdot}$ is the matricization operator, such that $\mat{\vec{\X}} = \X$.
In order to link \eqref{eq:joint_localization} with graph localization \eqref{def:localization_operator} and the classical translation operator, we observe the following relations 
\begin{align}
 \TJ{h}{i_1,t_1}(i_2,t_2) 
    &= \frac{1}{T} \hspace{0mm} \sum_{\substack{n=1\\\tau = 1}}^{N, T} \hspace{0mm} h(\lambda_{n},\omega_{k})\bar{\u}_{n}(i_{1})\u_{n}(i_{2})e^{2\pi j\frac{(\tau-1) (t_{2}-t_{1})}{T}} \nonumber\\
    &\hspace{-12mm}= \TJ{h}{i_1,0} (i_2, t_2-t_1) \label{eq:joint-translation-property} \\
    &\hspace{-12mm}= \frac{1}{T}\sum_{n=1}^{N} \[\sum_{\tau=1}^{T}h(\lambda_{n},\omega_{\tau})e^{2\pi j\frac{(\tau-1) (t_{2}-t_{1})}{T}}\] \bar{\u}_{n}(i_{1})\u_{n}(i_{2}) \nonumber \\
    &\hspace{-12mm}= \sum_{n=1}^{N} \[\TT{\H_{n,\cdot}}{t_1}\]\hspace{-0.5mm}(t_2) \, \bar{\u}_{n}(i_{1})\u_{n}(i_{2}) \label{eq:translation-first} \displaybreak \\
    &\hspace{-12mm}= \frac{1}{T}\sum_{\tau=1}^T \[\sum_{n=1}^N h(\lambda_{n},\omega_{\tau})\bar{\u}_{n}(i_1)\u_{n}(i_2)\] e^{2\pi j\frac{(\tau-1) (t_2-t_1)}{T}}  \nonumber \\
    &\hspace{-12mm}= \frac{1}{T}\sum_{\tau=1}^T \[\TG{\H_{\cdot,\tau}}{i_1}\]\hspace{-0.5mm}(i_2) \,  e^{2\pi j\frac{(\tau-1) (t_{2}-t_{1})}{T}}.  \label{eq:localization-first}
\end{align}
The above equations provide three key insights about the joint localization operator: 
\begin{enumerate}
\item From \eqref{eq:joint-translation-property}, we observe that the localization operator performs a translation along the time dimension. 
\item From \eqref{eq:translation-first}, it follows that joint localization consist of first localizing (translating) independently in time each line of the matrix $\H$ and then localizing independently on the graph each column of the resulting matrix. Joint localization is thus equivalent to a successive application of a graph and and a time localization operator. 
\item Furthermore, according to \eqref{eq:localization-first}, the successive localization in time and graph can be performed in any order.
\end{enumerate}
When the filter is separable, \ie when the joint frequency response can be written as the product of a frequency response defined solely in the vertex domain and one in the time domain $h(\lambda,\omega) = h_1(\lambda) h_2(\omega)$, the joint localization is simply
\begin{equation}
    h(\LJ) \, (\b{\delta}_{t} \otimes \b{\delta}_{i})  = \vec{ h(\LG)  (\b{\delta}_{t} \otimes \b{\delta}_{i}^\transpose) h(\LT)}.
\end{equation}
Nevertheless, for this work, we assume that the filter is not separable as it is a too restrictive hypothesis.

\section{Joint Time-Vertex Stationarity}
\label{sec:stationarity}

Let $\X$ be a discrete multivariate stochastic process with finite number of time-steps $T$ that is indexed by vertex $v_i$ and time $t$. We refer to such processes as \emph{joint time-vertex processes}, or \emph{joint} processes for short. 

To put our results in context, let us first review the established definitions of stationarity over time and vertex domains, respectively. Our definition will emerge us a consequence of both. We note that, although our exposition is self-contained, the reader will benefit from familiarizing with previous work on stationarity on graphs~\cite{perraudin2016stationary}. 

\begin{definition}[Time stationarity] \label{def:time-stationarity}
 A joint process $\X$ is Time Wide-Sense Stationary (TWSS), if and only if the following two properties hold independently for each vertex $v_i$: 
\begin{enumerate}
    \item The expected value is constant over the time domain $$\E{ \x^{i} } = c_i \1_T.$$ 
    \item There exists a function $\gamma_i$, for which $$ \left[ \bSigma_{\x^i} \right]_{t,\cdot}
    =  \left[ \mathbb{E} \big[\x^{i} {\x^i}^* \big] - \mathbb{E}\big[\x^{i}\big] \mathbb{E}\big[{\x^{i}}^*\big] \right]_{t,\cdot}= 
 \TT{\gamma_i}{t}$$
\end{enumerate}
Function ${\gamma}_i$ is the autocorrelation function of signal $\x^i$ in the Fourier domain, and is also referred to as Time Power Spectral Density (TPSD).
\end{definition}
We remind the reader that on a cyclic graph, localizing the TPSD is equivalent to translating the autocorrelation. Thus, using \eqref{eq:joint_localization_mat} we recover the classical definition, where the autocorrelation function depends only on the time difference: $\left[ \bSigma_{\x^i} \right]_{t,\tau} = \TT{\gamma_i}{0}(t-\tau)$. Simply put, assuming time stationarity is equivalent to asserting that the statistics of the two first moments are independent of the time. We also observe that the TPSD is the Fourier transform of the autocorrelation, agreeing with the Wiener-Khintchine Theorem~\cite{Wiener1930generalized}. 
To summarize, TPSD encodes the statistics of the signal in the spectral domain. 

This consideration allows us to generalize the concept of stationarity to graph signals. Please refer to the work of Perraudin and Vandergheynst~\cite{perraudin2016stationary} for a more detailled study. We express a variation of their definition in the following.
\begin{definition}[Vertex stationarity] \label{def:vertex-stationarity}
A joint process $\X = [\x_1 \x_2 \ldots \x_T]$ is called Vertex Wide-Sense (or second order) Stationary
(VWSS), if and only if the following two properties hold independently for each time $t$:
\begin{enumerate}
    \item The expected value is in the null space of the Laplacian $$\LG \E{\x_{t}} = \0_{N}.$$
    \item There exists a graph filter $s_t(\L_G)$, for which $$\left[\bSigma_{\x_t}\right]_{i,\cdot} = \left[ \E{ \x_{t} \x_{t}^* } - \E{ \x_{t}} \E{\x_{t}^*} \right]_{i,\cdot} = \TG{s_t}{i}.$$
\end{enumerate}
Function ${s}_t$ is the autocorrelation function of signal $\x_t$ in the graph Fourier domain and is also referred to as Vertex Power Spectral Density (VPSD).
\end{definition}
\vspace{2mm}

Considering that the null space of $\LT$ in both the normalized and the combinatorial case is the span of the constant eigenvector $\1_T$, the first condition of the above definition is analogous to the corresponding condition of the time stationarity definition. Moreover, the condition for the second moment is a natural generalization of the second condition of time stationarity where, instead of imposing translation invariance, we suppose invariance under the localization operator. This second condition is in fact equivalent to a generalization of the Wiener-Khintchine theorem and implies that $\bSigma_{\x_t}$ is jointly diagonalizable with $\LG$ (in TWSS the covariance is Toeplitz and thus also diagonalizable with the DFT matrix $\U_T$). 

We now unify the TWSS and GWSS in order to leverage both the time and vertex domain statistics.
\begin{definition}[Joint stationarity] \label{def:time-vertex-stationarity}
A process $\X$ is called Jointly (or time-vertex) Wide-Sense Stationary (JWSS), if and only if its vector form $\x = \vec{\X}$  satisfies the following properties:
\begin{enumerate}
     \item The expected value is in the null space of the Laplacian $$\LJ \E{\x } = \0_{NT}.$$
    \item There exists a joint filter $h(\L_J)$, for which $$\left[ \bSigma_{\x} \right]_{k,\cdot} = \left[ \E{\x \x^*} - \E{\x} \E{\x^*} \right]_{k,\cdot} = \vec{ \TJ{h}{i,t}},$$
    where $k = N (t-1) + i$.
\end{enumerate}
Function $h$ is the autocorrelation function of signal $\x$ in the joint Fourier domain and is also referred to as time-vertex power spectral density or Joint Power Spectral Density (JPSD) for short.

\end{definition}

The definition above is in fact equivalent to stating that the mean is constant, and the covariance matrix $\bSigma_{\x}$ is jointly diagonalizable with the joint Laplacian $\LJ$. The latter statement is (also) a generalization the Wiener-Khintchine theorem and is proven next. 

\begin{theorem}\label{theo:diagonalization}
A process $\X$ is JWSS if and only if 1) $\LJ \E{\x } = \0_{NT}$, and 2) its covariance matrix is jointly diagonalizable by the joint Fourier basis $\UJ$.
\end{theorem}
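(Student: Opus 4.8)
The first condition of the theorem is identical to the first condition of Definition~\ref{def:time-vertex-stationarity}, so the entire content reduces to showing that the second condition of that definition is equivalent to $\bSigma_{\x}$ being diagonalized by $\UJ$. The plan is to first recast the row-wise localization condition as a single matrix identity, and then read off the diagonalization in one line. Starting from \eqref{eq:joint_localization} and the Kronecker identity $\b{\delta}_{t}\otimes\b{\delta}_{i} = \e_{k}$ with $k = N(t-1)+i$, I would write
\begin{equation*}
\vec{\TJ{h}{i,t}} = h(\LJ)\,(\b{\delta}_{t}\otimes\b{\delta}_{i}) = h(\LJ)\,\e_{k},
\end{equation*}
which is exactly the $k$-th column of the joint-filter matrix $h(\LJ)$. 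Thus the second condition of Definition~\ref{def:time-vertex-stationarity}, imposed over all $k$, says that the $k$-th row of $\bSigma_{\x}$ coincides with the $k$-th column of $h(\LJ)$.

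Next I would upgrade this to the matrix identity $\bSigma_{\x} = h(\LJ)$. Both matrices are Hermitian --- $\bSigma_{\x}$ by construction and $h(\LJ) = \UJ\,h(\bLambda_G,\bOmega)\,\UJ^{\hermitian}$ because the joint filter is real-valued, so $h(\bLambda_G,\bOmega)$ is a real diagonal matrix --- and for a real process both are in fact real symmetric. Matching the rows of $\bSigma_{\x}$ with the columns of $h(\LJ)$ and using this symmetry to absorb the transpose (and any conjugation) then gives $\bSigma_{\x} = h(\LJ)$. Hence the second condition of Definition~\ref{def:time-vertex-stationarity} is equivalent to the existence of a real joint filter $h$ with $\bSigma_{\x} = h(\LJ)$, and the equivalence with diagonalizability is immediate in both directions. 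Forward: $\bSigma_{\x} = \UJ\,h(\bLambda_G,\bOmega)\,\UJ^{\hermitian}$ yields $\UJ^{\hermitian}\bSigma_{\x}\UJ = h(\bLambda_G,\bOmega)$, which is diagonal. Converse: if $\UJ^{\hermitian}\bSigma_{\x}\UJ = \D$ is diagonal, set $h(\lambda_n,\omega_{\tau}) \delequal \D_{k,k}$ with $k = N(\tau-1)+n$, so that $h(\bLambda_G,\bOmega) = \D$ and $\bSigma_{\x} = h(\LJ)$.

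The main obstacle I anticipate is in the converse, namely verifying that the prescription $h(\lambda_n,\omega_{\tau}) \delequal \D_{k,k}$ actually defines a function of the frequency pair $(\lambda,\omega)$ rather than merely of the index $k$. The assignment is ambiguous precisely when two distinct indices share the same pair $(\lambda_n,\omega_{\tau}) = (\lambda_{n'},\omega_{\tau'})$; since the temporal frequencies $\omega_{\tau}$ are pairwise distinct, such a collision can occur only through a repeated graph eigenvalue $\lambda_{n} = \lambda_{n'}$ evaluated at a common temporal frequency. I would handle this exactly as in the vertex-only setting of Definition~\ref{def:vertex-stationarity}: adopt the convention that the JPSD is constant on each joint eigenspace (equivalently, restrict to graphs of simple spectrum), which forces $\D_{k,k} = \D_{k',k'}$ on colliding pairs and makes $h$ well defined. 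The remaining steps --- the Kronecker/vectorization bookkeeping and the Hermitian identification of rows with columns --- are routine once this point is settled.
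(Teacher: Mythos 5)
Your proof follows essentially the same route as the paper's: the second condition of Definition~\ref{def:time-vertex-stationarity} together with \eqref{eq:joint_localization} identifies $\bSigma_{\x}$ with a joint filter $h(\LJ)$, from which diagonalizability by $\UJ$ is immediate. You are in fact more careful than the paper, whose proof silently matches rows of $\bSigma_{\x}$ with columns of $h(\LJ)$, does not spell out the converse direction, and ignores the repeated-eigenvalue ambiguity that you resolve with the constant-on-eigenspaces convention.
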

\begin{proof}
To prove an equivalence relation between the two definitions (\ie the JWSS definition and the one stated by the theorem) we will prove a one-to-one equivalence between their respective conditions. Clearly the first conditions of both definitions are identical. The second condition $\left[ \bSigma_{\x} \right]_{k,\cdot} = \vec{\TJ{h}{i,t}}$ of the joint stationarity definition together with~\eqref{eq:joint_localization} assert that the covariance being a joint filter $\bSigma_{\x} = h(\LJ)$ for some function $h$. We therefore have that $\bSigma_{\x} = \UJ h(\bLambda_G,\bOmega)  \UJ^*$ with $h(\bLambda_G,\bOmega)$ diagonal, which implies our claim.
\end{proof}

Interestingly, assuming joint stationarity is equivalent to assuming stationarity in both domains at the same time. 
\begin{theorem} \label{theo:time-vertex-def}
If a joint process $\X$ is JWSS, then it is both TWSS and GWSS.
\end{theorem}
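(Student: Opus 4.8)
The plan is to treat the mean and the covariance conditions of Definitions~\ref{def:time-stationarity} and~\ref{def:vertex-stationarity} separately, and in each case to read off the required sub-structure of $\x=\vec{\X}$ from the joint conditions of Definition~\ref{def:time-vertex-stationarity}. The workhorse throughout is the Kronecker factorization $\UJ=\UT\otimes\UG$ together with the localization identities \eqref{eq:joint-translation-property}--\eqref{eq:localization-first} already established for the joint localization operator; recall that GWSS here denotes the vertex stationarity of Definition~\ref{def:vertex-stationarity}.

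For the mean, I would first characterize $\ker(\LJ)$. Because $\UJ=\UT\otimes\UG$ diagonalizes $\LJ$ with joint eigenvalues equal to the sum of a graph eigenvalue and a temporal eigenvalue, and because both $\LG$ (on a connected $G$) and $\LT$ are positive semi-definite with one-dimensional null spaces spanned by $\1_N$ and $\1_T$ (as recalled after Definition~\ref{def:vertex-stationarity}), a joint eigenvalue vanishes if and only if both of its components vanish. Hence $\LJ\,\E{\x}=\0_{NT}$ forces $\E{\x}=c\,\1_{NT}$, i.e. $\E{\X}=c\,\1_N\1_T^{\transpose}$. Reading off columns then gives $\E{\x_t}=c\,\1_N$, so $\LG\,\E{\x_t}=\0_N$ (the GWSS mean condition), and reading off rows gives $\E{\x^i}=c\,\1_T$ with $c_i=c$ (the TWSS mean condition).

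For the covariance, Theorem~\ref{theo:diagonalization} lets me write the $(k_1,k_2)$ entry of $\bSigma_{\x}$ as $\TJ{h}{i_1,t_1}(i_2,t_2)$, where $k_\ell=N(t_\ell-1)+i_\ell$. To recover GWSS I fix a time $t_1=t_2=t$: the corresponding diagonal block of $\bSigma_{\x}$ is $\bSigma_{\x_t}$, and evaluating \eqref{eq:translation-first} at $t_1=t_2=t$ collapses each time localization to its lag-zero value $\TT{\H_{n,\cdot}}{0}(0)=\tfrac{1}{T}\sum_{\tau=1}^{T}h(\lambda_n,\omega_\tau)$, a scalar $s(\lambda_n)$ independent of $t$ (using \eqref{def:localization_operator_time}). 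Comparing with \eqref{def:localization_operator_el} identifies $[\bSigma_{\x_t}]_{i,\cdot}=\TG{s}{i}$, which is the second GWSS condition with VPSD $s_t=s$. To recover TWSS I instead fix a vertex $i_1=i_2=i$: the associated strided sub-matrix is $\bSigma_{\x^i}$, its entries depend only on the lag $t_2-t_1$ by \eqref{eq:joint-translation-property}, and by \eqref{eq:translation-first} they equal $\sum_{n=1}^{N}|\u_n(i)|^2\,\TT{\H_{n,\cdot}}{t_1}(t_2)$. Since a $|\u_n(i)|^2$-weighted sum of time localizations is itself a single time localization with frequency response $\gamma_i(\omega)=\sum_{n=1}^{N}|\u_n(i)|^2 h(\lambda_n,\omega)$, this yields $[\bSigma_{\x^i}]_{t,\cdot}=\TT{\gamma_i}{t}$, the second TWSS condition.

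I expect the mean step, not the covariance step, to be the delicate one. The covariance argument is essentially bookkeeping on the already-proven identities \eqref{eq:joint-translation-property}--\eqref{eq:localization-first}. The mean argument, by contrast, must genuinely establish that $\LJ\,\E{\x}=\0$ forces a \emph{constant} mean, so that both the pointwise condition $\LG\,\E{\x_t}=\0_N$ and the per-vertex condition $\E{\x^i}=c_i\1_T$ follow; a careless reading would only produce the weaker Sylvester relation $\LG\,\E{\X}=-\,\E{\X}\,\LT$, and it is precisely the positive semi-definiteness of both Laplacians together with the simplicity of their zero eigenvalue (connectedness of $G$) that rules out the extra solutions.
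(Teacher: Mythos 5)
Your covariance argument is essentially the paper's own proof. The paper likewise reads the TWSS and VWSS second-moment conditions directly off the localization identities \eqref{eq:translation-first}--\eqref{eq:localization-first}, identifying the TPSD as $\gamma_i(\omega_\tau) = \[\TG{\H_{\cdot,\tau}}{i}\](i)$ --- which by \eqref{def:localization_operator_el} is exactly your $\sum_{n} |\u_n(i)|^2 h(\lambda_n,\omega_\tau)$ --- and the VPSD as $s_t(\lambda_n) = \[\TT{\H_{n,\cdot}}{t}\](t)$, which by \eqref{def:localization_operator_time} is exactly your lag-zero scalar $\frac{1}{T}\sum_{\tau} h(\lambda_n,\omega_\tau)$. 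So that half of your proof matches the paper step for step, and the bookkeeping is correct.

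The divergence is in the mean condition, which the paper dismisses in one line as ``straightforward.'' You are right that there is something to verify --- the naive reading only yields the Sylvester relation $\LG \E{\X} + \E{\X}\LT = \0$ --- but your resolution over-assumes, and your diagnosis of what closes the gap is not quite right. You require $G$ connected so that $\ker \LJ = \mathrm{span}(\1_{NT})$; connectedness appears nowhere in the theorem, and it is not needed: positive semi-definiteness alone suffices. Writing $\M = \E{\X}$, the relation $\LG\M = -\M\LT$ gives
\begin{equation*}
\trace{\M^* \LG \M} = -\trace{\M \LT \M^*},
\end{equation*}
where the left-hand side is nonnegative and the right-hand side is nonpositive, so both vanish; since $\LG, \LT \succeq 0$ this forces $\LG\M = \0$ and $\M\LT = \0$ separately. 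The columns of $\M$ then lie in $\ker\LG$ --- which is precisely the GWSS mean condition (note it asks only for null-space membership, not constancy) --- and the rows lie in $\ker\LT = \mathrm{span}(\1_T)$ because the cycle is always connected, giving $\E{\x^i} = c_i \1_T$. This covers disconnected $G$, where your argument does not apply: there the constants $c_i$ can genuinely differ across connected components, so your conclusion $\E{\x} = c\,\1_{NT}$ is false in general, even though the theorem's conclusion still holds. Apart from this unnecessary restriction, your proof is correct and follows the same route as the paper.
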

\begin{proof}
It is straightforward to see that $\LJ \E{\x } = \0_{NT}$ if and only if both $\LT  \E{\x_{t}} = \0_{N}$ and $\LG  \E{\x^{i}} = \0_{T}$, hold for all $t$ and $i$. 
We still need to show that the second-order moment properties of TWSS and VWSS are equivalent to that of JWSS. If a process is joint stationary, then from \eqref{eq:localization-first} we have that, for each vertex $v_i$
\begin{align*}
     \left[\bSigma_{\x^i}\right]_{t_1,t_2} &= [\TJ{h](i,t_2)}{i,t_1} \\
     &= \frac{1}{T}\sum_{\tau=1}^T  \[\TG{ \H_{\cdot,\tau} }{i}\](i) \,  e^{2\pi j\frac{{(\tau-1)} (t_{2}-t_{1})}{T}}
\end{align*}
which is equivalent to asserting that $\x^i$ is stationary in time with TPSD $\gamma_i(\omega_\tau) = [\TG{ \H_{\cdot,\tau} }{i}] (i)$. Similarly, using~\eqref{eq:translation-first}, we find that for each time $t$
\begin{align*}
     \left[\bSigma_{\x_t}\right]_{i_1,i_2} = [\TJ{h](i_2,t)}{i_1,t} = \sum_{n=1}^{N} \[\TT{\H_{n,\cdot}}{t}\]\hspace{-0.5mm}(t) \, \bar{\u}_{n}(i_{1})\u_{n}(i_{2})
\end{align*}
meaning that process $\x_t$ is stationary with VPSD $s_t(\lambda_n) = [\TG{\H_{n,\cdot}}{t}](t)$.
\end{proof}

\begin{example}[White \emph{i.i.d.} noise]
White i.i.d. noise $\w \in \Rbb^{NT}$ is JWSS for any graph. Indeed, the first moment $\E{\w}$ is constant for any time and vertex. Moreover, due to being an identity matrix, the covariance of $\w$ is diagonalized by the joint Fourier basis of any graph $\bSigma_\w = \I = \UJ \I \UJ^*$. This last equation tells us that the JPSD is constant, which implies that similar to the classical case, white noise contains all joint (time-vertex) frequencies.   
\end{example}

An interesting property of JWSS processes is that stationarity is preserved through a filtering operation. 
\begin{theorem} \label{theo:time-vertex-psd-trans}
When a joint filter $f(\L_J)$ is applied to a JWSS process $\X$, the result $\b{Y}$ remains JWSS with mean $f(0,0)\E{\X}$ and JPSD that satisfies
\begin{equation}
    h_{\b{Y}}(\lambda,\omega) = f^2(\lambda,\omega) \cdot h_{\X}(\lambda,\omega).
\end{equation}
\end{theorem}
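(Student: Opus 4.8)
The plan is to verify the two defining conditions of JWSS for $\b{Y}$ by means of the diagonalization characterization of Theorem~\ref{theo:diagonalization}, extracting the claimed mean and JPSD as byproducts. I would set $\y = \vec{\b{Y}} = f(\LJ)\,\x$ with $\x = \vec{\X}$, and write the filtering matrix as $F = f(\LJ) = \UJ\, f(\bLambda_G,\bOmega)\, \UJ^\hermitian$. A key preliminary observation is that, because the frequency response $f$ takes real values, the diagonal matrix $f(\bLambda_G,\bOmega)$ is real, so $F$ is Hermitian ($F^\hermitian = F$) and $\overline{f(\bLambda_G,\bOmega)} = f(\bLambda_G,\bOmega)$; I would use this throughout.

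First I would handle the mean. By linearity of expectation, $\E{\y} = F\,\E{\x}$. Since $\X$ is JWSS, $\E{\x}$ lies in the null space of $\LJ$, \ie in the eigenspace associated with the joint frequency $(\lambda,\omega) = (0,0)$; applying $F$ therefore scales this component by exactly $f(0,0)$, which gives the claimed mean $\E{\b{Y}} = f(0,0)\E{\X}$. As $\E{\y}$ is then a scalar multiple of $\E{\x}$, it stays in the null space, so $\LJ\,\E{\y} = \0_{NT}$ and the first JWSS condition is immediate.

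Next I would compute the covariance. Pulling the deterministic matrix $F$ outside both expectations gives $\bSigma_{\y} = F\,\bSigma_{\x}\,F^\hermitian$. Substituting the diagonalizations $\bSigma_{\x} = \UJ\, h_{\X}(\bLambda_G,\bOmega)\,\UJ^\hermitian$ (available from Theorem~\ref{theo:diagonalization} since $\X$ is JWSS) and the expression above for $F$, then cancelling the inner $\UJ^\hermitian\UJ = \I$ by unitarity, collapses everything to $\UJ$ times a product of three diagonal matrices times $\UJ^\hermitian$. Because diagonal matrices commute and $\overline{f(\bLambda_G,\bOmega)} = f(\bLambda_G,\bOmega)$, that product equals $f^2(\bLambda_G,\bOmega)\,h_{\X}(\bLambda_G,\bOmega)$, so $\bSigma_{\y}$ is diagonalized by $\UJ$ (the second JWSS condition), and reading off the diagonal yields $h_{\b{Y}}(\lambda,\omega) = f^2(\lambda,\omega)\,h_{\X}(\lambda,\omega)$.

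The main obstacle I anticipate is not the covariance algebra, which is routine once unitarity and the realness of $f$ are in hand, but the mean step: one must argue carefully that the null space of $\LJ$ coincides exactly with the $(\lambda,\omega) = (0,0)$ eigenspace, so that $f(\LJ)$ acts there as multiplication by $f(0,0)$. This relies on the Laplacian conventions of the preliminaries (in the combinatorial case both $\LG$ and $\LT$ annihilate the constant vector, at $\lambda = 0$ and $\omega = 0$ respectively, so $\LJ$ annihilates $\1_{NT}$); a short remark reconciling the lag-operator convention may be needed to keep the value $f(0,0)$ literally correct.
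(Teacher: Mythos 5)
Your proposal is correct and follows essentially the same route as the paper's proof: linearity of expectation plus the null-space property for the mean, then $\bSigma_{\y} = f(\LJ)\,\bSigma_{\x}\,f(\LJ)^\hermitian = \UJ\, f^2(\bLambda_G,\bOmega)\, h_{\X}(\bLambda_G,\bOmega)\, \UJ^\hermitian$ and an appeal to Theorem~\ref{theo:diagonalization} for the conclusion. Your treatment is in fact slightly more careful than the paper's, which skips the explicit verification that $\LJ\E{\y} = \0_{NT}$ and glosses over the identification of the null space of $\LJ$ with the $(0,0)$ eigenspace that you rightly flag.
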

\begin{proof}
The output of a filter $f(\L_J)$ can be written in vector form as $\y = f(\LJ) $. If the input signal $\x$ is JWSS, we can confirm that the first moment of the filter output is zero, $\E{f(\LJ) \x }= f(\LJ) \E{\x }=f(0,0) \E{\x }$. The last equality follows from the fact that by definition $\E{\x }$ is in the null space of $\LJ$. The computation of the second moment gives
\begin{align*}
\bSigma_{\y} &= \E{ f(\LJ)\b{x} \( f(\LJ) \b{x} \)^* }  - \E{h(\LJ) \x } \E{ (f(\LJ) \x)^* }\\
    &= f(\LJ) \E{ \b{x} \b{x}^* } f(\LJ) - f(\LJ) \E{\x}\E{\x^*} f(\LJ)^*\\ 
    &=  f(\LJ) \b{\Sigma}_{\x} f(\LJ)^* \\
    & = \UJ \, \(f^2(\bLambda_G,\bOmega)\, h_\X(\bLambda_G,\bOmega) \) \, \UJ^*,
\end{align*}
which is, from Theorem~\ref{theo:diagonalization}, JWSS as it is diagonalizable by $\U_J$. 
\end{proof}

As the following diagram illustrates, Theorem \ref{theo:time-vertex-psd-trans} provides a simple way to artificially produce JWSS signals with a prescribed PSD $f^2$ by simply filtering white noise with the joint filter $f(\L_J)$.
\begin{center}
    \includegraphics[width=0.2\textwidth]{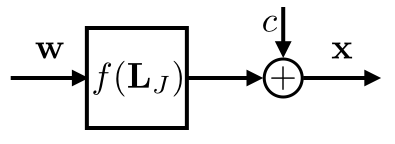}
\end{center}
The resulting signal will be stationary with PSD $f^2$ and this holds for white noise abiding to any distribution (not only Gaussian). In the sequel, we assume for simplicity that the signal is centered at $0$, \ie $\E{\x} = 0 \cdot \1$. 

Whenever it is clear from the context, in the following we simply refer to the TPSD, VPSD, and JPSD as PSD.

\section{Joint Power Spectral Density estimation}

As the JPSD is central in our method, we need a reliable way to compute it. Since we take into account the correlation both in the time and in the vertex domain, the actual size of the covariance matrix $\bSigma_{\x}$ is $N T\times NT$. In many cases, this matrix is not computable nor can be even stored. Additionally, if attempt to estimate it using classical covariance estimation methods, the number of samples necessary for obtaining a reasonable estimation accuracy can be prohibitive. The number of samples needed for obtaining a good sample covariance matrix of an $n$-dimensional process is generally not known, but for distributions with finite second moment it has been shown to be $O(n \log{n})$ by Rudelson~\cite{rudelson1998,vershynin2012}. In our case, this theorem implies that we need $O(NT \log{(NT)})$ signals, of $NT$ variables each, to obtain a good estimate of the statistics of a joint process.

To circumvent this issue, we leverage the time-vertex structure of the data. The basic idea behind our approach stems from two established methods used to estimate the TPSD of a temporal signal, namely Bartlett's and Welch's methods~\cite{bartlett1950periodogram}, which are summarized below. 

\paragraph{TPSD estimation methods.} In Bartlett's method, the signal (timeseries) is first cut into equally sized segments without overlap. Then, the Fourier transform of each segment is computed. Finally, the PSD  is obtained by averaging over segments the squared amplitude of the Fourier coefficients. Welch's method~\cite{welch1967use} is a generalization that works with overlapping segments. We can see the TPSD estimation of both methods as the averaging over time of the squared coefficients of a Short Time Fourier Transform (STFT). We remind the reader that STFT is used to extract the frequency content of a temporal signal at a given time, by first selecting a part of the signal using a window and then compute the discrete Fourier transform. More concretely, for a discrete signal $\b{s}$ of length $T$, the circular discrete sampled STFT of $\b{s}$ at the $m$-th (out of $M$) frequency band, and under window $\b{g}$ is
\begin{eqnarray*}
   \STFT{\b{s}}(k,m) \delequal \sum_{t=1}^{T}\b{s}(t) \, \overline{\b{g}\(t_k\)} \, e^{-2\pi j\frac{(t-1) (m-1)}{M}},
\end{eqnarray*} 
where $t_k = \mod{t-a(k-1)}{T}+1$, scalar $a$ is the shift in time between two successive windows~\cite[equation 1]{ltfatnote030}, and $\mod{t}{T}$ finds the remainder after division by $T$ \ie $\mod{t}{T}=t-T\lfloor \frac{t}{T} \rfloor$.
Note that $k=0,1,\dots,\lfloor \frac{T}{a} \rfloor -1$ is the time band centered at $k a$ and that $m=1,\dots,M$ is the frequency band index. For additional insights about this transform, we refer the reader to \cite{grochenig2013foundations,feichtinger2012gabor}.

\paragraph{Joint PSD estimation.} Based on the idea that the Bartlett method is an average of STFT coefficients, we propose to use the GFT of the STFT as a tool to estimate the joint PSD.
Consider a time window $\b{g}$ and a time-vertex signal $\X$. We first define the coefficients' tensor as 
\begin{align*}
{\b{C}}_{n,k,m} &\delequal \sum_{i=1}^N [\UG]_{i,n} \, \STFT{ \x^i }(k, m)  \\
    &= \sum_{i=1}^N [\UG]_{i,n} \sum_{t=1}^{T} \X_{i,t} \, \overline{\b{g}(t_k)} \, e^{-2\pi j \frac{(t-1) (m-1)}{M}}.
\end{align*}
A usual parameter for $M$ is the support size of $\b{g}$. Then, for half-overlapping windows, we set $a$ to $M/2$. For any discrete vertex frequency $\lambda_n$ and time frequency $\omega_m = 2 \pi m / M$, our JPSD estimator is
\begin{equation}
    \tilde{h}\(\lambda_n, \omega_m \) \delequal \frac{a}{T\|\b{g}\|_2^2} \sum_{k=0}^{\lfloor T/a \rfloor -1} {\b{C}}_{n,k,m}^2
\end{equation}
In order to get an estimate of $h$ at $\omega \neq \omega_m$, we interpolate between the known points. Alternatively, with sufficient computation power, one may set $M=T$. Though alternative choices are possible, we suggest using the iterated sine window 
$$
g(t) = \sin\(0.5\pi\cos\(\pi t / M\)^2\) \, \chi_{[-M/2,M/2]}(t),
$$
where $\chi_{[-M/2,M/2]}(t)=1$ if $t\in [-M/2,M/2]$ and $0$ otherwise, as it turns the STFT into a tight operator for $M=2a$.
We defer an error analysis of the estimator for the longer version of this paper.


\paragraph{Other PSD estimation methods.} In case $T \ll N$, two problems arise with the aforementioned method. First, we cannot compute the graph Fourier basis $\UG$, and second the number of sample in time might not be sufficient to average over time. To circumvent this problem, one can do the average over the graph vertex using a technique similar to the PSD estimator of \cite{perraudin2016stationary}. This technique will be studied in future work.

\section{Optimization framework}

We can leverage our definition of stationarity to generalize the optimization framework of~\cite{perraudin2016stationary}, useful for denoising, interpolating, and more generally deconvoling stationary processes.
Concretely, suppose that our measurements $\y$ are generated by a linear model
\begin{equation} \label{eq:general_data_model}
    \y = \b{A}\x+\w,
\end{equation}
where, as in the rest of this document, $\x$ and $\y$ are the vectorized version of $\X,\Y$. Further, suppose that the JPSD of $\x$ is $h_\X$, whereas the noise $\w$ is zero mean has JPSD $h_\W$ and may follow any distribution. Matrix $\b{A}$ is a general linear operator, not assumed to be jointly diagonalizable with $\LJ$. 

\paragraph{Tikhonov-regularization.} Whet the signal $\x$ varies smoothly on the graph, i.e is low frequency based, the classical approach of finding $\x$ from $\y$, consists of solving the following optimization scheme, commonly referred to as Tikhonov-regularization
\begin{equation} \label{prob:graph_tik}
    \argmin_\b{x} \| \b{A}\b{x}-\b{y}\|_2^2 + \alpha \, \b{x}^* \LJ \b{x}
\end{equation}
Notice that the prior above is separable into two terms
\begin{align}
    \x^* \LJ \x = \trace{ \X^* \LG \X } + \trace{ \X \LT \X^* }.
\end{align}
As a result, optimization problem~\eqref{prob:graph_tik} can only encode a particular joint time-vertex structure. Additionally this scheme requires the parameter $\alpha$ to be tuned and does not take into account the statistical structure of the signals. 

\paragraph{Wiener optimization framework.} We instead propose to recover $\x$ as the solution of the Wiener optimization problem
\begin{equation} \label{prob:Wiener-opt}
    \dot{\x} = \argmin_{\b{x}}  \|\b{A}\b{x} - \b{y}\|_2^2 + \|f(\LJ) (\b{x} -\E{\b{x}}) \|_2^2,
\end{equation}
where $f(\lambda,\omega)$ are the joint Fourier penalization weights, defined as 
\begin{align}
    f(\lambda,\omega) \delequal \left|\sqrt{\frac{h_\W(\lambda,\omega)}{h_\X(\lambda,\omega)}}\right|=\frac{1}{\sqrt{\mathrm{SNR}(\lambda, \omega)}}.
\end{align}
In the noise-less case, one alternatively solves the problem
\begin{equation} \label{prob:Wiener-opt-noiseless}
    \dot{\bf{x}} = \argmin_{\b{x}}  \|h_{\X}^{-\frac{1}{2}}(\LJ) \, \b{x} \|_2^2, \quad \text{subject to} \quad \b{A} \b{x} = \b{y}.
\end{equation}
Intuitively, the weight $f(\lambda, \omega)$ heavily penalizes frequencies associated with low SNR and vice-versa. Formally, we can show that: 
\begin{itemize}
	\item If $\X$ is a Gaussian process, then the solution of Problem~\eqref{prob:Wiener-opt} coincides with a MAP estimator.
	\item If $\b{A}$ is a masking operator, then the solution of Problem~\eqref{prob:Wiener-opt} coincides with the minimum mean square error linear estimator.
	\item If $\b{A} = a(\LJ)$ is a joint filter, then the solution of Problem~\eqref{prob:Wiener-opt} is a joint Wiener filter~\cite{loukas2016frequency}.
\end{itemize}
The proofs are generalizations of Theorems 3,4 and 5 of \cite{perraudin2016stationary}.

\paragraph{Comparison to the MAP estimator.} There are three main advantages of the Wiener optimization framework over a Gaussian MAP estimator based on an empirical covariance matrix estimate.
Firstly, assuming stationarity allows for a more robust estimate of the covariance matrix. This is crucial in this problem since we typically expect the number of variable $N\times T$ to be large and an empirical estimate of the covariance matrix to be expensive. 
Secondly, storing the covariance might not be possible as it consists of $O((NT)^2)$ elements. On the contrary, the JPSD $h_\X$ has only $NT$ elements.
Finally, thanks to proximal splitting methods, we can derive an algorithm for solving Problem~\eqref{prob:Wiener-opt} that requires only the application of $\b{A}$ and spectral graph filtering. On the contrary the classical Gaussian MAP estimator requires the inverse of a large part of the covariance matrix.

\section{Experiments}
We apply our methods to a weather dataset depicting the temperature of 32 weather stations, over a span of $31$ days. Our experiment aims to show that 1) joint stationarity is a useful model, even in datasets which may violate the strict conditions of our definition, and 2) that time-vertex stationarity can yield a significant increase in denoising and recovery accuracy, as compared to time- or vertex-based methods, on a real dataset. 
 
\paragraph{Experimental setup.} The French national meteorological service has published in open access a dataset\footnote{Access to the raw data is possible directly from \url{https://donneespubliques.meteofrance.fr/donnees_libres/Hackathon/RADOMEH.tar.gz}} with hourly weather observations collected during the Month of January 2014 in the region of Brest (France). The graph is built from the coordinates of the weather stations by connecting all the neighbors in a given radius with a weight function $[\W_G]_{i_1,i_2} = \mathrm{exp}({-k \, d(i_1,i_2)^2 })$, where $d(i_1,i_2)$ is the euclidean distance between the stations $i_1$ and $i_2$. Parameter $k$ is adjusted to as obtain an average degree around $3$ ($k$, however, is not a sensitive parameter). As sole pre-processing, we remove the mean (over time and stations) of the temperature. This is equivalent to removing the first moment. 

The dataset, which consisted of a total of $T = 744$ timesteps, was split into two parts of size $\rho T$ and $ (1 - \rho) T$, respectively.  We use the first part of the dataset to estimate the PSD and the second to quantify the joint filter performance. We compare our joint method to the state-of-the-art wiener filters for the disjoint time/vertex domains, which are known to outperform non-statistics based methods, such as graph/time Tikhonov and graph/time TV. To highlight the benefit of the joint approach, in the disjoint cases we use the entire dataset to estimate the PSD (for $\rho = 1$ the same data are used for both training and testing).   

\begin{figure}[t!]
\centering
\begin{subfigure}{0.9\columnwidth}
\hspace{-3mm}
\includegraphics[width=\columnwidth]{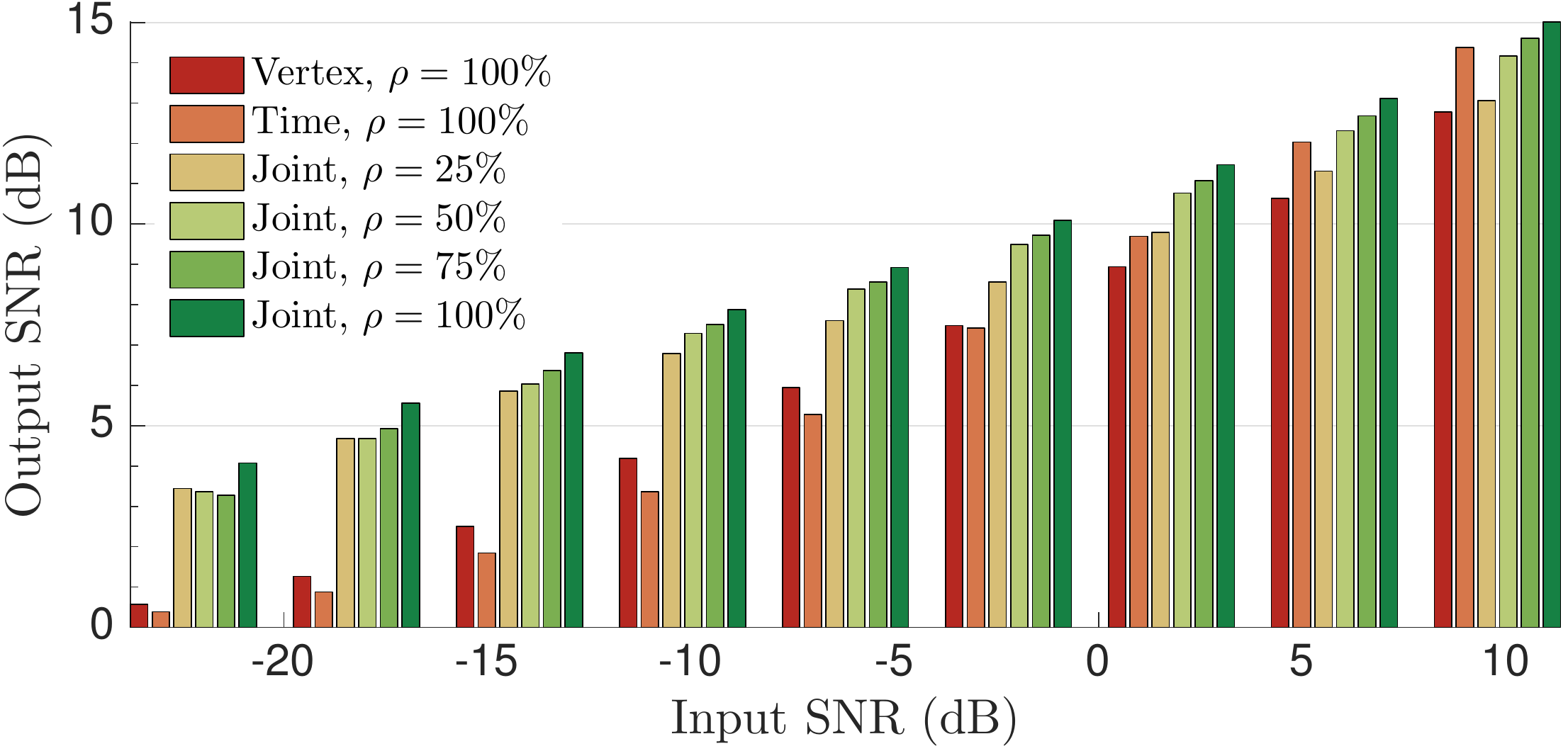}%
\caption{Denoising}%
\label{subfiga}%
\end{subfigure}\hfill%
\\
\begin{subfigure}{0.9\columnwidth}
\hspace{-3mm}
\includegraphics[width=\columnwidth]{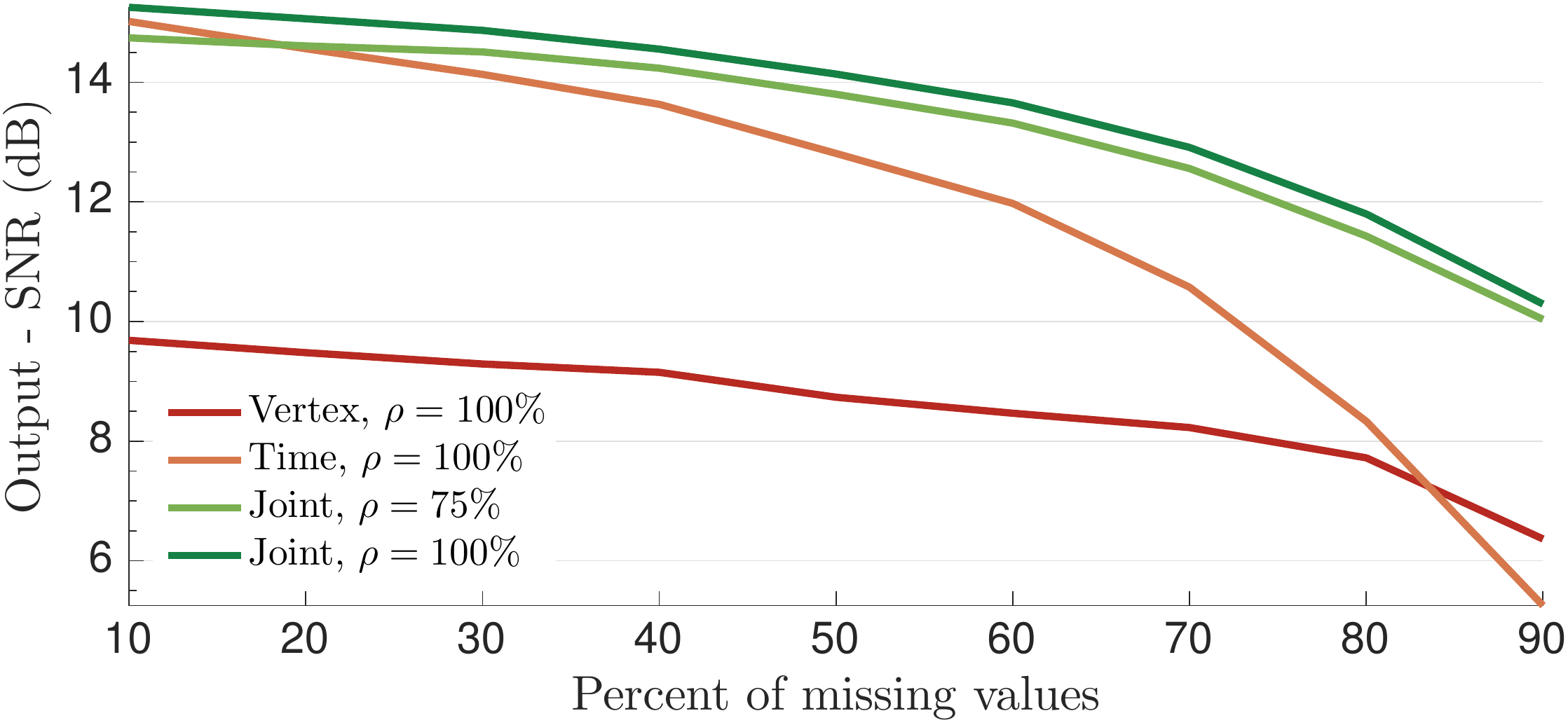}
\caption{Recovery}%
\label{subfigb}%
\end{subfigure}\hfill%
\caption{Experiments on Molene temperatures. The joint approach becomes especially meaningful when the available data are very noisy or are few. The recovery performance is slightly improved when a larger percentage $\rho$ of data are available for training.}
\label{fig:result_molene}
\end{figure}

\paragraph{Denoising.} For this experiment, we add Gaussian random noise to the data and remove the noise thanks to Wiener filter ($\b{A}=\b{I}$ in problem~\eqref{prob:Wiener-opt}). The result is displayed in Figure~\ref{fig:result_molene}. Joint stationarity outperforms time or vertex stationarity especially when the noise level is high. Indeed, joint stationarity allows the estimator to average over more samples. In order to obtain a good denoising, we need a good JPSD estimation. The effect of the dataset size can be observed through the parameter $\rho$, with larger $\rho$ resulting in higher accuracy. Especially for large input SNR, the joint approach becomes particularly meaningful as it outperforms other approaches, even when a very small portion of the data is used for JPSD estimation (whereas the time and vertex based methods $\rho = 1$, meaning that they use the entire dataset for PSD estimation).  

\paragraph{Recovery.} We also consider a recovery problem, where a given percentage of entries of matrix $\X$ is missing. Figure~\ref{fig:result_molene} depicts the recovery error obtained using problem~\eqref{prob:Wiener-opt-noiseless}. Again, we observe a significant improvement over competing methods. This improvement is achieved because the joint approach leverages the correlation both in the time and in the vertex domain: each random variable in a TWSS or VWSS process is dependent on only $T-1$ or $N-1$ other random variables, respectively (rather than $NT-1$ as in the joint case), implying a higher recovery variance.

\section{Conclusion}

This paper proposed a novel definition of (wide-sense) stationarity appropriate for time-varying graph signals. We showed that joint stationarity possess a number of useful properties, that are familiar from the classical setting. Based on our definition, we proposed a Wiener optimization framework and the accompanying PSD estimation method, which together can be used to for solving the problem of inverting a rank-deficient linear system under a jointly stationary input and disturbance. The proposed optimization framework is optimal in the mean-squared error sense and scales well with the number of time samples. In our experiment with a weather dataset, the joint approach was shown to yield a significant benefit over disjoint statistical methods for signal denoising and recovery.       

The longer version of this paper will expand our analysis and evaluate our approach in a larger set of experiments. We will additionally make a detailed complexity analysis and propose solutions to avoid the computationaly expensive diagonalization of the graph Laplacian $\LG$. We remark that our simulations were done using the GSPBOX~\cite{perraudin2014gspbox}, the UNLocBoX~\cite{perraudin2014unlocbox}, and the LTFAT~\cite{ltfatnote030}. The code reproducing all figures will be made available soon.

\bibliographystyle{IEEEtran}
\bibliography{biblio}

\end{document}